\documentclass{article}
\usepackage{gram2026,times}

\newcommand{\lstcodefontsize}{\small}

\usepackage[utf8]{inputenc}
\usepackage[T1]{fontenc}
\usepackage{hyperref}
\usepackage{url}
\usepackage{booktabs}
\usepackage{amsfonts}
\usepackage{amsmath}
\usepackage{amsthm}

\theoremstyle{definition}
\newtheorem{theorem}{Theorem}

\newtheorem{corollary}[theorem]{Corollary}
\newtheorem{proposition}[theorem]{Proposition}

\usepackage{bm}
\usepackage{nicefrac}
\usepackage{microtype}
\usepackage{xcolor}
\usepackage{graphicx}
\usepackage{adjustbox}
\usepackage{algorithm}
\usepackage{algorithmic}
\renewcommand{\algorithmiccomment}[1]{\hfill\textcolor{gray}{\textit{// #1}}}
\usepackage{subcaption}
\usepackage{placeins}
\usepackage{listings}
\let\cite\citep

\newcommand{\jinaembv}{\href{https://huggingface.co/jinaai/jina-embeddings-v3}{\texttt{jina-embeddings-v3}}}
\newcommand{\jinaembiv}{\href{https://huggingface.co/jinaai/jina-embeddings-v4}{\texttt{jina-embeddings-v4}}}
\newcommand{\jinaclipv}{\href{https://huggingface.co/jinaai/jina-clip-v1}{\texttt{jina-clip-v1}}}
\newcommand{\jinaclipvv}{\href{https://huggingface.co/jinaai/jina-clip-v2}{\texttt{jina-clip-v2}}}
\newcommand{\jinacodeS}{\href{https://huggingface.co/jinaai/jina-code-embeddings-0.5b}{\texttt{jina-code-embeddings-0.5b}}}
\newcommand{\jinacodeL}{\href{https://huggingface.co/jinaai/jina-code-embeddings-1.5b}{\texttt{jina-code-embeddings-1.5b}}}
\newcommand{\jinacolbertv}{\href{https://huggingface.co/jinaai/jina-colbert-v2}{\texttt{jina-colbert-v2}}}

\newenvironment{widefigure}[1][]{\begin{figure}[#1]}{\end{figure}}
\newenvironment{widetable}[1][]{\begin{table}[#1]}{\end{table}}
\newcommand{\narrowwidth}{\textwidth}

\title{Embedding Compression via Spherical Coordinates}

\author{Han Xiao \\
Jina AI by Elastic \\
\texttt{han.xiao@jina.ai}}

\iclrfinalcopy 
\maintrack 
\begin{document}

\maketitle

\begin{abstract}
We present an $\epsilon$-bounded compression method for unit-norm embeddings that achieves 1.5$\times$ compression, 25\% better than the best prior lossless method. The method exploits that spherical coordinates of high-dimensional unit vectors concentrate around $\pi/2$, causing IEEE 754 exponents to collapse to a single value and high-order mantissa bits to become predictable, enabling entropy coding of both. Reconstruction error is bounded by float32 machine epsilon ($1.19 \times 10^{-7}$), making reconstructed values indistinguishable from originals at float32 precision. Evaluation across 26 configurations spanning text, image, and multi-vector embeddings confirms consistent compression improvement with zero measurable retrieval degradation on BEIR benchmarks.
The method requires no training.
\end{abstract}

\section{Introduction}

Embedding vectors power RAG pipelines, agentic search, and multimodal retrieval. A typical embedding model produces 1024-dimensional float32 vectors, requiring 4 KB per embedding. At scale, a database of 100 million embeddings requires 400 GB. The problem is more severe for multi-vector representations, where late-interaction models like ColBERT~\cite{colbert} produce one embedding per token, multiplying storage by approximately 100$\times$. While lossy quantization achieves high compression ratios, many applications benefit from high-fidelity reconstruction for embedding caches, API serialization, network transmission, and archival storage. The state-of-the-art lossless approach transposes the embedding matrix, byte-shuffles to group exponent bytes, and applies entropy coding~\cite{zipnn,blosc}, but achieves only 1.2$\times$ compression because float32 mantissa bits have near-maximum entropy.

Because cosine similarity is the standard retrieval metric, most embedding models produce \emph{unit-norm} vectors with $\|\mathbf{x}\|_2 = 1$. This constraint places embeddings on the surface of a high-dimensional hypersphere $S^{d-1}$, yet existing lossless methods ignore this geometric structure, while prior work using spherical coordinates has focused exclusively on lossy quantization~\cite{trojak2020,polarquant,pcdvq2025}. Unit-norm vectors can be equivalently represented using $d-1$ angular coordinates. The Cartesian form has values spanning $\pm 0.001$ to $\pm 0.3$, requiring many different IEEE 754 exponents. The angular form concentrates around $\pi/2 \approx 1.57$~\cite{cai2013angles}, collapsing exponents to a single value and making high-order mantissa bits predictable. This combination of spherical transformation with entropy coding has not been explored.

\paragraph{Contribution.} We present an $\epsilon$-bounded compression method that converts Cartesian to spherical coordinates before byte shuffling and entropy coding, achieving 1.5$\times$ compression across 26 embedding configurations. The spherical transform introduces reconstruction error bounded by float32 machine epsilon ($1.19 \times 10^{-7}$), preserving retrieval quality with zero measurable degradation on BEIR benchmarks (Appendix~\ref{app:retrieval}). For a ColBERT index of 1 million documents, this reduces storage from 240 GB to 160 GB. The method requires no training and applies to text, image, and multi-vector embeddings.
\section{Related Work}

\subsection{Lossless Floating-Point Compression}

\paragraph{IEEE 754 Float32.} A float32 number consists of 1 sign bit, 8 exponent bits, and 23 mantissa bits, encoding the value $x = (-1)^s \times 2^{e - 127} \times (1 + m/2^{23})$. The exponent determines magnitude, with values near 1.0 having exponent $\approx 127$ and values near 0.01 having exponent $\approx 120$. The mantissa encodes precision within that magnitude range. For lossless compression, the exponent byte represents 25\% of each float32 value while the mantissa represents the remaining 75\%.

The HPC community developed byte shuffling~\cite{blosc}, which reorders array bytes to group all exponent bytes together, improving entropy coding because bytes at the same position across different floats often share similar values. ZipNN~\cite{zipnn} applies this technique to neural network weights, achieving 33\% compression on BF16 by separating exponent and mantissa bytes before applying zstd~\cite{zstd}. FCBench~\cite{fcbench2024} benchmarks lossless floating-point compression methods across scientific domains. For float32 embeddings, the mantissa comprises 3/4 of the data with near-maximum entropy of $\sim$7.3 bits/byte. Even with perfect exponent compression, the remaining 75\% mantissa data stays near maximum entropy, limiting compression to approximately $1.33\times$. Current methods achieve $\sim$1.25$\times$, approaching this bound. Our method exceeds this limit by also reducing mantissa entropy through value concentration. ZipNN achieves 1.20$\times$ on float32 embeddings (our baseline), bounded by this 1.33$\times$ theoretical ceiling of exponent-only compression. In contrast, our spherical transform creates a low-entropy representation before entropy coding, concentrating both exponent and mantissa bytes to reach 1.5$\times$. Unlike spherical lossy methods such as PolarQuant and PCDVQ that quantize angles to fixed bit-widths, we retain full float32 angles and derive compression from geometric concentration rather than deliberate precision loss.

Recent work has identified that trained neural network \emph{weights} exhibit natural exponent concentration due to heavy-tailed dynamics of stochastic gradient descent. ECF8~\cite{ecf8} shows that model weights follow $\alpha$-stable distributions, leading to exponent entropy of 2 to 3 bits, and achieves up to 26.9\% lossless compression on FP8 model weights. DFloat11~\cite{dfloat11} exploits similar properties for BF16 weights. These methods target model parameters, where exponent concentration arises naturally from training dynamics.

\subsection{Spherical and Polar Coordinate Methods}

\paragraph{Spherical Coordinates.} A $d$-dimensional vector $\mathbf{x}$ can be represented using spherical coordinates consisting of a radius $r = \|\mathbf{x}\|_2$ and $d-1$ angles $\theta_1, \ldots, \theta_{d-1}$. The first $d-2$ angles are computed as $\theta_i = \arccos\left(x_i / \sqrt{\sum_{j=i}^{d} x_j^2}\right)$ for $i = 1, \ldots, d-2$, each lying in $[0, \pi]$. The final angle uses $\arctan2$ to preserve quadrant information, with $\theta_{d-1} = \arctan2(x_d, x_{d-1}) \in [-\pi, \pi]$. For unit-norm vectors where $r = 1$, the radius can be omitted, leaving $d-1$ angles to represent $d$ Cartesian coordinates.

Several recent works employ polar or spherical coordinates for compression, but all use lossy quantization rather than lossless encoding. Trojak and Witherden~\cite{trojak2020} use spherical polar coordinates for lossy compression of 3D vectors in computational physics, achieving 1.5$\times$ compression by quantizing angles to fixed bit-widths. This method is limited to 3D and employs deliberate precision loss. PCDVQ~\cite{pcdvq2025} uses polar coordinate decoupling for vector quantization of LLM weights, separately clustering direction and magnitude with codebooks to achieve 2-bit quantization. PolarQuant~\cite{polarquant} transforms KV cache embeddings to polar coordinates and quantizes the resulting angles, exploiting that angles after random preconditioning have bounded distributions. Both PCDVQ and PolarQuant target lossy compression of model internals such as weights and KV caches, not $\epsilon$-bounded compression of embedding outputs. In contrast, our method uses spherical coordinates not as a quantization target but as an entropy-reducing transform: angles are stored at full float32 precision, and the compression gain comes from the resulting exponent and mantissa concentration rather than from deliberate precision reduction.

\subsection{Lossy Vector Compression}

Product quantization~\cite{pq} achieves 4 to 32$\times$ compression by partitioning vectors into subspaces and quantizing independently. Binary and scalar quantization~\cite{huggingface_quantization} offer simpler alternatives, while learned codebooks~\cite{qinco2} push compression further. These lossy methods achieve higher ratios than lossless approaches but introduce reconstruction error. Our work targets applications requiring high-fidelity reconstruction with bounded error.

\paragraph{Concurrent work.} TurboQuant~\cite{turboquant} (ICLR 2026) independently identifies the same geometric phenomenon underlying our method: angle concentration on the hypersphere. They show that random rotation induces a Beta distribution on coordinates, and exploit this for lossy quantization using Lloyd-Max at 3--4 bits per dimension, achieving 4$\times$+ compression. Their method and ours operate at opposite ends of the fidelity spectrum: TurboQuant targets high-ratio lossy compression, while our spherical transform targets $\epsilon$-bounded compression at 1.5$\times$. The convergence of three independent groups (TurboQuant, PolarQuant~\cite{polarquant}, and this work) on the angle concentration phenomenon validates its significance as a fundamental geometric property for vector compression.

\begin{widetable}[h]
\centering
\caption{Comparison with related floating-point compression methods.}
\label{tab:comparison}
\small
\adjustbox{max width=\linewidth}{%
\begin{tabular}{lccccc}
\toprule
\textbf{Method} & \textbf{Domain} & \textbf{Fidelity} & \textbf{Polar?} & \textbf{Unit-norm?} & \textbf{Mechanism} \\
\midrule
ECF8~\cite{ecf8} & LLM weights & Exact (FP8) & No & No & Natural concentration \\
DFloat11~\cite{dfloat11} & LLM weights & Exact (BF16) & No & No & Natural concentration \\
EFloat~\cite{efloat} & Embeddings & Lossy & No & No & Variable-length encoding \\
PolarQuant~\cite{polarquant} & KV cache & Lossy & Yes & No & Angle quantization \\
PCDVQ~\cite{pcdvq2025} & LLM weights & Lossy & Yes & No & Codebook quantization \\
Trojak~\cite{trojak2020} & 3D physics & Lossy & Yes & No & Angle quantization \\
TurboQuant~\cite{turboquant} & General vectors & Lossy & Yes & No & Lloyd-Max quantization \\
\textbf{Ours} & Embeddings & \textbf{$\epsilon$-bounded} & \textbf{Yes} & \textbf{Yes} & Geometric transform \\
\bottomrule
\end{tabular}}
\end{widetable}
\section{Method}

Figure~\ref{fig:pipeline} and Algorithm~\ref{alg:compress} present the compression pipeline: convert to spherical coordinates, transpose to group same-angle values, byte shuffle to separate exponents, and compress with zstd. Decompression reverses these steps. The spherical transform is mathematically invertible, but floating-point transcendental functions introduce bounded reconstruction error. Using double precision for intermediate calculations keeps this error below float32 machine epsilon of $1.19 \times 10^{-7}$. We call this \emph{$\epsilon$-bounded} compression: reconstruction error is bounded by machine epsilon, meaning reconstructed values are indistinguishable from originals at float32 precision. This preserves cosine similarity to 1e-7 and does not affect retrieval quality as shown in Table~\ref{tab:baselines} and verified end-to-end in Appendix~\ref{app:retrieval}. Implementation is in Appendix~\ref{app:code}.

The spherical transform provides compression by concentrating IEEE 754 exponents and high-order mantissa bits. Unit-norm vectors in $\mathbb{R}^d$ lie on the hypersphere $S^{d-1}$, so $d-1$ angles suffice instead of $d$ Cartesian coordinates. Cartesian coordinates of unit-norm embeddings scale as $1/\sqrt{d}$, spanning values in $[0.001, 0.3]$ for typical dimensions and requiring 22 to 40 different exponents. The first $d-2$ spherical angles, by contrast, are bounded to $[0, \pi]$ and concentrate around $\pi/2 \approx 1.57$ in high dimensions~\cite{cai2013angles}; Figure~\ref{fig:exponent} shows this empirically for \jinaembiv{} embeddings. This concentration collapses exponents to a single dominant value of 127 with probability ${>}0.999$ (Theorem~\ref{thm:exp-concentration}), reducing exponent entropy from 2.6 to 0.03 bits/byte for \jinaembiv{}, with similar patterns across models validated in Appendix~\ref{app:entropy}.

Exponent compression alone would yield only $\sim$1.1$\times$ in practice (the 1.33$\times$ theoretical limit assumes zero exponent bits, but entropy coding has overhead and exponents retain $\sim$0.03 bits). The additional gain comes from the high-order mantissa byte: when angles cluster around $\pi/2 \approx 1.5708$, the IEEE 754 mantissa bits encoding the fractional part also become predictable. Empirically, the high-order mantissa byte entropy drops from 8.0 to 4.5 bits, contributing $\sim$11\% additional savings beyond exponents. Together, exponent and mantissa concentration yield the observed 1.5$\times$ compression.

ECF8~\cite{ecf8} and DFloat11~\cite{dfloat11} exploit \emph{natural} exponent concentration in model weights arising from training dynamics, whereas our method \emph{creates} exponent concentration through a deterministic geometric transformation, as summarized in Table~\ref{tab:comparison}.

Cosine similarity can also be computed directly from spherical angles without reconstructing Cartesian coordinates. The \textsc{Similarity} procedure in Algorithm~\ref{alg:compress} computes $\mathbf{x} \cdot \mathbf{y}$ from angles $(\theta_1, \ldots, \theta_{d-1})$ and $(\phi_1, \ldots, \phi_{d-1})$ via a backward recurrence in $O(d)$ operations, derived by expanding the Cartesian dot product in spherical form and factoring the cumulative sine products (Appendix~\ref{app:theory}). This allows streaming similarity during decompression without materializing the full Cartesian vector.

\begin{widefigure}[h]
\centering
\includegraphics[width=\textwidth]{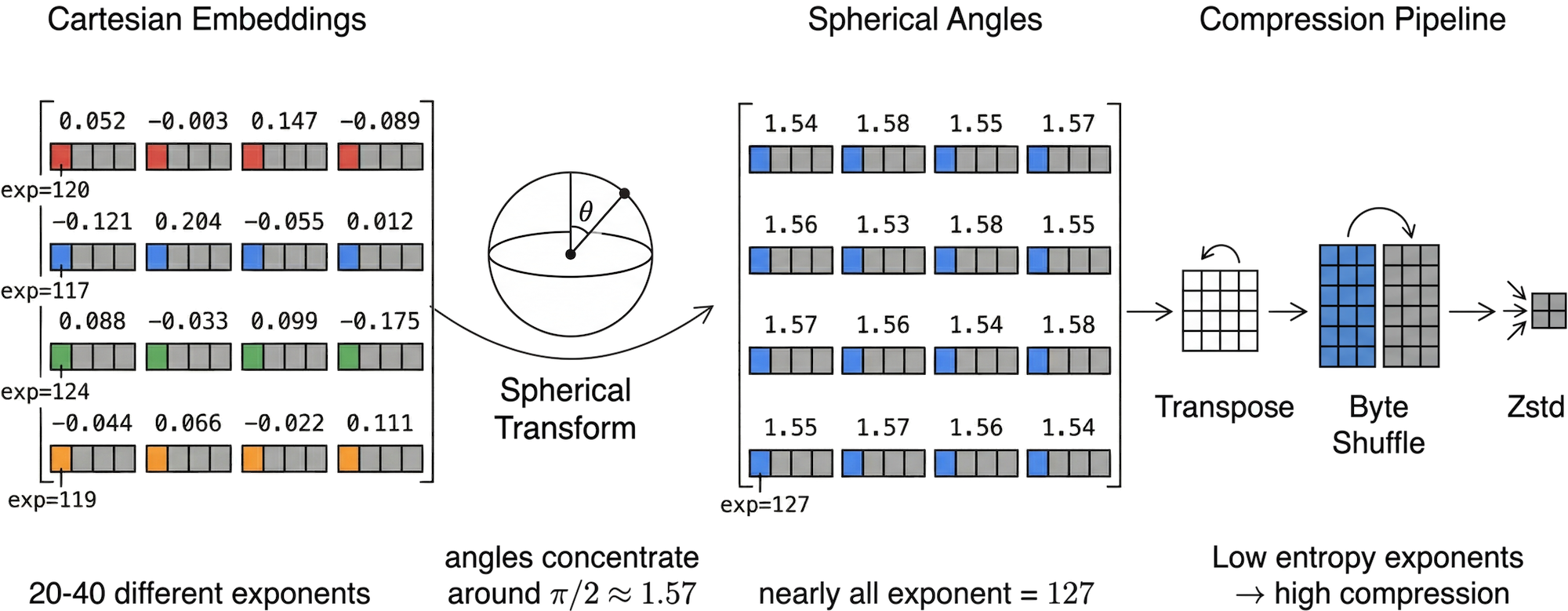}
\caption{Compression pipeline. Cartesian coordinates span diverse magnitudes with 20 to 40 different exponents, shown in varied colors. The spherical transform produces angles concentrated around $\pi/2 \approx 1.57$, collapsing nearly all exponents to 127, shown in uniform color. Transpose groups same-position angles across vectors, byte shuffle separates exponent bytes, and zstd compresses the low-entropy exponent stream.}
\label{fig:pipeline}
\end{widefigure}

\begin{figure}[h]
\centering
\includegraphics[width=\narrowwidth]{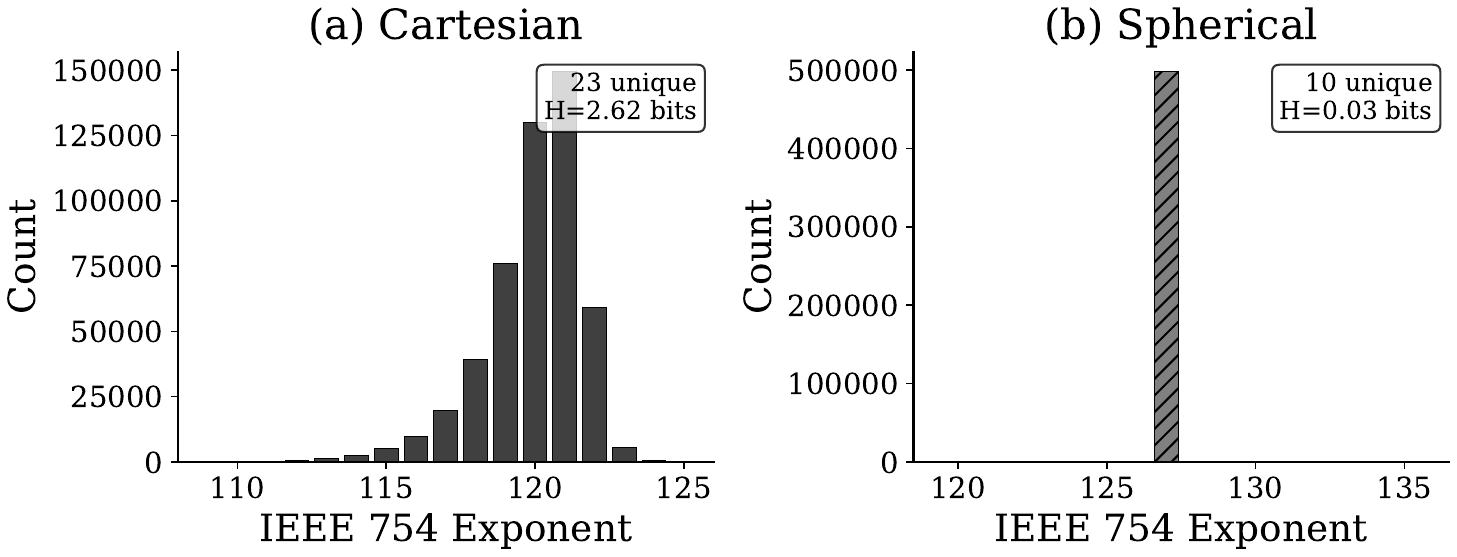}
\caption{IEEE 754 exponent distribution for \jinaembiv{} (2048d). (a) Cartesian coordinates span 23 exponent values; (b) spherical angles concentrate around exponent 127 with 99.7\% frequency.}
\label{fig:exponent}
\end{figure}

\begin{algorithm}[t]
\caption{Spherical Compression and Direct Similarity}\label{alg:compress}
\begin{algorithmic}[1]
\STATE \textbf{Compress}($\mathbf{X} \in \mathbb{R}^{n \times d}$): \COMMENT{$\mathbf{X}$ is unit-norm}
\STATE \quad $\bm{\Theta} \gets \textsc{ToSpherical}(\mathbf{X})$ \COMMENT{$\bm{\Theta} \in \mathbb{R}^{n \times (d-1)}$}
\STATE \quad $\mathbf{T} \gets \textsc{Transpose}(\bm{\Theta})$ \COMMENT{Group same-angle positions}
\STATE \quad $\mathbf{B} \gets \textsc{ByteShuffle}(\mathbf{T})$ \COMMENT{Separate exponent/mantissa bytes}
\STATE \quad \textbf{return} $\textsc{Zstd}(\mathbf{B})$
\STATE
\STATE \textbf{Decompress}($\mathbf{C}$): \COMMENT{Returns $\bm{\Theta} \in \mathbb{R}^{n \times (d-1)}$}
\STATE \quad $\mathbf{B} \gets \textsc{ByteUnshuffle}(\textsc{Zstd}^{-1}(\mathbf{C}))$
\STATE \quad $\bm{\Theta} \gets \textsc{Transpose}(\mathbf{B})$
\STATE
\STATE \textbf{Similarity}($\bm{\theta}, \bm{\phi}$) where $\bm{\theta}, \bm{\phi} \in \bm{\Theta}$: \COMMENT{Rows of $\bm{\Theta}$}
\STATE \quad $R \gets \cos(\theta_{d-1} - \phi_{d-1})$
\STATE \quad \textbf{for} $k = d-2, \ldots, 1$ \textbf{do}
\STATE \quad \quad $R \gets \cos\theta_k \cos\phi_k + \sin\theta_k \sin\phi_k \cdot R$
\STATE \quad \textbf{return} $R$ \COMMENT{$= \mathbf{x} \cdot \mathbf{y}$}
\end{algorithmic}
\end{algorithm}
\section{Evaluation}

\subsection{Experimental Setup}

Table~\ref{tab:baselines} compares compression methods on \jinaembiv{} embeddings. Standard compressors and scientific floating-point compressors~\cite{fpzip,zfp,sz3} all achieve under 1.10$\times$. Trans+Shuffle+Zstd (the approach used by ZipNN~\cite{zipnn}) achieves 1.20$\times$ by grouping exponent bytes for better entropy coding and serves as our baseline for subsequent experiments, representing the best lossless method. Although our method introduces bounded error ($\epsilon$-bounded, as defined in Section~3), we compare against lossless baselines because our error is below float32 machine epsilon of 1.19e-7, making reconstructed values indistinguishable at float32 precision. Mantissa truncation variants illustrate the trade-off between compression ratio and reconstruction error.

\begin{widetable}[h]
\centering
\caption{Baseline comparison (\jinaembiv{}, 7600 vectors, 2048d). Sizes in MB.}
\label{tab:baselines}
\begin{tabular}{lrrrrr}
\toprule
\textbf{Method} & \textbf{Size} & \textbf{Ratio} & \textbf{Max Err} & \textbf{Mean Err} & \textbf{Cos Max Err} \\
\midrule
Raw float32 & 59.38 & 1.00$\times$ & 0 & 0 & 0 \\
gzip -9 & 55.14 & 1.08$\times$ & 0 & 0 & 0 \\
brotli -11 & 54.52 & 1.09$\times$ & 0 & 0 & 0 \\
zstd -19 & 55.05 & 1.08$\times$ & 0 & 0 & 0 \\
npz & 55.14 & 1.08$\times$ & 0 & 0 & 0 \\
fpzip & 54.11 & 1.10$\times$ & 0 & 0 & 0 \\
zfp & 58.99 & 1.01$\times$ & 0 & 0 & 0 \\
SZ3 & 55.03 & 1.08$\times$ & 0 & 0 & 0 \\
ZipNN (Baseline) & 49.57 & 1.20$\times$ & 0 & 0 & 0 \\
Baseline+Truncate 5 bits & 42.23 & 1.47$\times$ & 9e-7 & 2e-8 & 2e-6 \\
Baseline+Truncate 6 bits & 40.30 & 1.55$\times$ & 2e-6 & 5e-8 & 5e-6 \\
Baseline+Truncate 7 bits & 38.40 & 1.62$\times$ & 4e-6 & 9e-8 & 1e-5 \\
\midrule
\textbf{Spherical (Ours)} & \textbf{37.59} & \textbf{1.58$\times$} & \textbf{9e-8} & \textbf{2e-8} & \textbf{2e-7} \\
\bottomrule
\end{tabular}
\end{widetable}

The spherical transform requires $O(nd)$ operations using backward cumulative summation for partial norms; our C implementation achieves over 1 GB/s for the transform alone. With zstd level 1, total pipeline throughput reaches 487 MB/s encoding and 605 MB/s decoding while maintaining the same compression ratio. See Appendix~\ref{app:speed}.

\subsection{Main Results}

Table~\ref{tab:results} presents results across 26 configurations: 20 text models on 7600 AG News samples, 3 image models on CIFAR-10, and 3 multi-vector ColBERT models. All embeddings are unit-normalized.

\begin{widetable}[h]
\centering
\caption{Compression results across 26 embedding configurations. Sizes in MB.}
\label{tab:results}
\footnotesize
\begin{tabular}{lcrrrrc}
\toprule
\textbf{Model} & \textbf{Dim} & \textbf{Raw} & \textbf{Baseline} & \textbf{Ours} & \textbf{Ratio} & \textbf{Impr.} \\
\midrule
\multicolumn{7}{l}{\textit{Text Embeddings}} \\
MiniLM & 384 & 11.13 & 9.37 & 7.43 & 1.50$\times$ & +26.0\% \\
E5-small & 384 & 11.13 & 9.10 & 7.31 & 1.52$\times$ & +24.5\% \\
GTE-small & 384 & 11.13 & 9.19 & 7.29 & 1.53$\times$ & +26.0\% \\
BGE-base & 768 & 22.27 & 18.60 & 14.61 & 1.52$\times$ & +27.3\% \\
E5-base & 768 & 22.27 & 18.19 & 14.31 & 1.56$\times$ & +27.2\% \\
GTE-base & 768 & 22.27 & 18.33 & 14.30 & 1.56$\times$ & +28.2\% \\
MPNet & 768 & 22.27 & 18.76 & 14.56 & 1.53$\times$ & +28.9\% \\
Nomic-v1.5 & 768 & 22.27 & 18.57 & 14.58 & 1.53$\times$ & +27.4\% \\
EmbedGemma-300m & 768 & 22.27 & 18.72 & 14.82 & 1.50$\times$ & +26.3\% \\
\jinacodeS{} & 896 & 25.98 & 21.89 & 17.07 & 1.52$\times$ & +28.2\% \\
\jinaembv{} & 1024 & 29.69 & 24.95 & 19.81 & 1.50$\times$ & +26.0\% \\
\jinaclipvv{} & 1024 & 29.69 & 24.97 & 20.03 & 1.48$\times$ & +24.6\% \\
BGE-large & 1024 & 29.69 & 24.85 & 19.36 & 1.53$\times$ & +28.4\% \\
E5-large & 1024 & 29.69 & 24.32 & 18.94 & 1.57$\times$ & +28.4\% \\
mE5-large & 1024 & 29.69 & 24.32 & 18.91 & 1.57$\times$ & +28.6\% \\
GTE-large & 1024 & 29.69 & 24.34 & 18.85 & 1.58$\times$ & +29.0\% \\
Qwen3-Embed-0.6B & 1024 & 29.69 & 24.94 & 19.52 & 1.52$\times$ & +27.8\% \\
BGE-M3 & 1024 & 29.69 & 24.91 & 19.38 & 1.53$\times$ & +28.6\% \\
\jinacodeL{} & 1536 & 44.53 & 37.48 & 28.40 & 1.57$\times$ & +32.0\% \\
\jinaembiv{} & 2048 & 39.06 & 32.44 & 24.61 & 1.59$\times$ & +31.8\% \\
\midrule
\multicolumn{7}{l}{\textit{Multimodal Image}} \\
\jinaclipv{} & 768 & 5.86 & 4.90 & 3.88 & 1.51$\times$ & +26.5\% \\
\jinaclipvv{} & 1024 & 7.81 & 6.52 & 5.22 & 1.50$\times$ & +24.9\% \\
\jinaembiv{} & 2048 & 15.63 & 12.95 & 9.84 & 1.59$\times$ & +31.6\% \\
\midrule
\multicolumn{7}{l}{\textit{Multi-Vector ColBERT}} \\
\jinaembiv{} & 128 & 27.70 & 22.69 & 18.82 & 1.47$\times$ & +20.5\% \\
\jinacolbertv{} & 1024 & 243.22 & 202.96 & 160.48 & 1.52$\times$ & +26.5\% \\
BGE-M3 & 1024 & 239.89 & 197.87 & 154.13 & 1.56$\times$ & +28.4\% \\
\bottomrule
\end{tabular}
\end{widetable}

Compression ranges from 1.47$\times$ to 1.59$\times$ across all 26 configurations, representing a 20 to 32\% improvement over baseline. Results are consistent across text, image, and multi-vector embeddings, indicating that compression derives from the unit-norm constraint rather than modality-specific patterns. For ColBERT indices with 50 to 100 embeddings per document, a 1M document collection compresses from approximately 240 GB to 160 GB. End-to-end retrieval evaluation on BEIR benchmarks confirms zero measurable degradation (Appendix~\ref{app:retrieval}). Entropy analysis, ablation studies, and reduced-precision format analysis appear in Appendix~\ref{app:entropy} and~\ref{app:ablation}.
\section{Conclusion}

We presented an $\epsilon$-bounded compression method for unit-norm embeddings achieving 1.5$\times$ compression by exploiting the concentration of spherical angles around $\pi/2$ in high dimensions. Using double precision for intermediate calculations, reconstruction error stays below float32 machine epsilon ($1.19 \times 10^{-7}$) with $10\times$ lower error than mantissa truncation at the same compression ratio, and zero measurable retrieval degradation on BEIR benchmarks. The method applies to text, image, and multi-vector embeddings without training or codebooks, filling the gap between lossless compression at 1.2$\times$ and lossy quantization at 4$\times$ or higher. Beyond storage, the spherical representation enables similarity computation directly from compressed angles without full Cartesian reconstruction, supporting streaming decompression, early termination in top-k retrieval, and fused GPU kernels that avoid materializing full vectors. Our current analysis targets float32 embeddings, as this is the native output format of all major embedding APIs and models; the method's compression gain relies on the 8-bit exponent and 23-bit mantissa structure specific to IEEE 754 float32. Adapting the approach to BF16 (8-bit mantissa) or INT8 (no exponent-mantissa separation) requires different entropy-reduction strategies and is left to future work.

\FloatBarrier
\bibliography{references}
\bibliographystyle{gram2026}

\appendix
\section{End-to-End Retrieval Evaluation}
\label{app:retrieval}

To confirm that spherical coordinate compression preserves downstream retrieval quality, we evaluate on three BEIR retrieval benchmarks: SciFact (5K docs), NFCorpus (3.6K docs), and FiQA (57K docs). For each dataset, we embed all queries and documents using three models spanning 384--1024 dimensions, compute nDCG@10 and Recall@10 on the original embeddings, then apply the full compress-decompress roundtrip (Cartesian $\to$ spherical $\to$ float32 angles $\to$ Cartesian) and recompute the same metrics.

\begin{widetable}[h]
\centering
\caption{Retrieval metrics before and after spherical compression roundtrip. All nDCG@10 and Recall@10 values are identical to six decimal places. Max element-wise reconstruction error stays below $9 \times 10^{-8}$, under float32 machine epsilon ($1.19 \times 10^{-7}$).}
\label{tab:retrieval}
\small
\begin{tabular}{llcrrrr}
\toprule
\textbf{Model} & \textbf{Dataset} & \textbf{Dim} & \textbf{nDCG@10} & \textbf{Recall@10} & \textbf{Max Err} & \textbf{$\Delta$Score} \\
\midrule
MiniLM   & SciFact  & 384  & 0.6485 & 0.7833 & 6.7e-8 & 4.2e-7 \\
MiniLM   & NFCorpus & 384  & 0.5348 & 0.3026 & 7.5e-8 & 4.2e-7 \\
MiniLM   & FiQA     & 384  & 0.4854 & 0.4418 & 7.5e-8 & 5.4e-7 \\
\midrule
BGE-base & SciFact  & 768  & 0.7426 & 0.8659 & 7.5e-8 & 6.6e-7 \\
BGE-base & NFCorpus & 768  & 0.6013 & 0.3410 & 8.9e-8 & 7.2e-7 \\
BGE-base & FiQA     & 768  & 0.5047 & 0.4579 & 8.9e-8 & 7.2e-7 \\
\midrule
E5-large & SciFact  & 1024 & 0.7244 & 0.8353 & 6.3e-8 & 8.9e-7 \\
E5-large & NFCorpus & 1024 & 0.5873 & 0.3411 & 6.7e-8 & 8.9e-7 \\
E5-large & FiQA     & 1024 & 0.5054 & 0.4505 & 6.7e-8 & 1.0e-6 \\
\bottomrule
\end{tabular}
\end{widetable}

Table~\ref{tab:retrieval} shows that nDCG@10 and Recall@10 are identical before and after compression across all 9 configurations. The maximum element-wise reconstruction error remains below $9 \times 10^{-8}$, and the maximum cosine similarity perturbation ($\Delta$Score) stays below $1.1 \times 10^{-6}$, confirming that the compression introduces no measurable degradation in retrieval quality.

\section{Python Implementation}\label{app:code}

\begin{lstlisting}
import numpy as np, zstandard as zstd

def compress(x, level=19):  # x: (n, d) unit-norm float32
    n, d = x.shape
    xd = x.astype(np.float64)  # Double precision for transforms
    ang = np.zeros((n, d-1), np.float64)
    for i in range(d-2):
        r = np.linalg.norm(xd[:, i:], axis=1)
        ang[:, i] = np.arccos(np.clip(xd[:, i] / r, -1, 1))
    ang[:, -1] = np.arctan2(xd[:, -1], xd[:, -2])
    ang = ang.astype(np.float32)  # Store as float32
    shuf = np.frombuffer(ang.T.tobytes(), np.uint8).reshape(-1, 4).T.tobytes()
    return np.array([n, d], np.uint32).tobytes() + zstd.compress(shuf, level)

def decompress(blob):
    n, d = np.frombuffer(blob[:8], np.uint32)
    ang = np.frombuffer(zstd.decompress(blob[8:]), np.uint8).reshape(4, -1).T
    ang = np.frombuffer(ang.tobytes(), np.float32).reshape(d-1, n).T
    ang = ang.astype(np.float64)  # Double precision for inverse
    x, s = np.zeros((n, d), np.float64), np.ones(n, np.float64)
    for i in range(d-2):
        x[:, i] = s * np.cos(ang[:, i])
        s *= np.sin(ang[:, i])
    x[:, -2], x[:, -1] = s * np.cos(ang[:, -1]), s * np.sin(ang[:, -1])
    return x.astype(np.float32)
\end{lstlisting}
\section{Compression Speed}\label{app:speed}

The reference Python implementation achieves 21 MB/s encoding due to an $O(nd^2)$ loop for partial norm computation. Our C implementation reduces this to $O(nd)$ by precomputing partial squared norms via backward cumulative summation:
\begin{equation}
r^2_i = \sum_{j=i}^{d-1} x_j^2 = r^2_{i+1} + x_i^2, \quad r^2_{d-1} = x_{d-1}^2
\end{equation}
This eliminates redundant computation and achieves over 1000 MB/s for the spherical transform, a 50$\times$ speedup over the Python reference, with double-precision accumulation preserving numerical precision.

Table~\ref{tab:speed} shows throughput across zstd compression levels. The compression ratio remains nearly constant at 1.50--1.52$\times$ regardless of zstd level because the spherical transform already concentrates exponents into a low-entropy distribution. Higher zstd levels provide negligible benefit while reducing encoding speed by 100$\times$. We recommend level 1 for most applications: it achieves 487 MB/s encoding with the same 1.52$\times$ compression ratio as level 19.

\begin{widetable}[h]
\centering
\caption{Throughput vs. zstd level (768d, 100 MB, single-threaded CPU). Compression ratio is constant because the spherical transform already minimizes exponent entropy.}
\label{tab:speed}
\begin{tabular}{rrrrrr}
\toprule
\textbf{Level} & \textbf{Size (MB)} & \textbf{Ratio} & \textbf{Enc (MB/s)} & \textbf{Dec (MB/s)} \\
\midrule
1 & 65.7 & 1.52$\times$ & 487 & 605 \\
3 & 65.8 & 1.52$\times$ & 400 & 609 \\
5 & 66.1 & 1.51$\times$ & 285 & 600 \\
7 & 66.2 & 1.51$\times$ & 258 & 583 \\
9 & 66.3 & 1.51$\times$ & 218 & 587 \\
11 & 66.4 & 1.51$\times$ & 143 & 581 \\
13 & 66.4 & 1.51$\times$ & 56 & 573 \\
15 & 66.3 & 1.51$\times$ & 44 & 557 \\
17 & 65.7 & 1.52$\times$ & 10 & 620 \\
19 & 66.6 & 1.50$\times$ & 7 & 555 \\
21 & 66.5 & 1.50$\times$ & 5 & 564 \\
\bottomrule
\end{tabular}
\end{widetable}
\section{Formal Analysis}\label{app:theory}

For $\mathbf{x}$ uniformly distributed on $S^{d-1}$, the $k$-th polar angle has marginal density $p(\theta_k) \propto \sin^{d-k-1}(\theta_k)$ for $\theta_k \in [0, \pi]$~\cite{cai2013angles,vershynin2018high}. For large $d-k$, this concentrates around $\pi/2$ with variance $1/(d-k-1)$, approximating $\mathcal{N}(\pi/2, 1/(d-k-1))$.

\begin{theorem}[Exponent Concentration]\label{thm:exp-concentration}
For the $k$-th polar angle with $d - k \geq 64$, the IEEE 754 exponent byte equals 127 with probability $P > 0.999$.
\end{theorem}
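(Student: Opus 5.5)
The plan is to turn the statement into a tail bound for the marginal density $p(\theta_k)\propto\sin^{d-k-1}\theta_k$ recalled just above, assuming $\mathbf{x}$ is uniform on $S^{d-1}$. A positive float32 has exponent byte $127$ exactly when its value lies in $[1,2)$. So it suffices to show
\[
P\bigl(\theta_k\notin[1,2)\bigr)<10^{-3}\quad\text{whenever } m:=d-k-1\ge 63 .
\]
There is one rounding caveat. The stored angle is the float32 rounding of the double-precision $\theta_k$, which could push a value just below $2$ up to exactly $2$. That only happens for $\theta_k\ge 2-2^{-24}$, which is contained in the upper tail already being excluded. The interval $[1,2)$ sits asymmetrically around $\pi/2$. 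The upper margin is $t_+=2-\pi/2\approx0.4292$ and the lower margin is $t_-=\pi/2-1\approx0.5708$, so the upper tail is the binding one.

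\textbf{Normalizer.} Substitute $u=\theta-\pi/2$, so the density becomes $\cos^m u/Z_m$ on $[-\pi/2,\pi/2]$. Here
\[
Z_m=\int_{-\pi/2}^{\pi/2}\cos^m u\,du=\sqrt{\pi}\,\frac{\Gamma\bigl(\tfrac{m+1}{2}\bigr)}{\Gamma\bigl(\tfrac m2+1\bigr)} .
\]
A standard Wallis/Gautschi-type inequality gives $Z_m\ge\sqrt{2\pi/(m+2)}$. I would cite Gautschi's inequality for the Gamma ratio rather than rederive it. At $m=63$ this gives $Z_{63}\gtrsim0.31$.

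\textbf{Tails.} Use $\cos u\le e^{-u^2/2}$ on $[0,\pi/2]$, which holds because $\log\cos u\le -u^2/2$ there. Together with the Mills-ratio bound this gives
\[
\int_t^{\pi/2}\cos^m u\,du\le\int_t^\infty e^{-mu^2/2}\,du\le\frac{e^{-mt^2/2}}{mt}.
\]
For $m=63$ and $t=t_+$, the exponent is $mt^2/2\approx5.80$, so the bound is about $3.0\times10^{-3}/27\approx1.1\times10^{-4}$. Dividing by $Z_{63}$ gives roughly $3.6\times10^{-4}$. The lower tail with $t=t_-$ gives $mt^2/2\approx10.3$ and a contribution of order $10^{-6}$. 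The sum is comfortably below $10^{-3}$.

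\textbf{Monotonicity in $m$.} To extend from $m=63$ to all $m\ge63$, note that $e^{-mt^2/2}/(mt)$ decreases in $m$. The normalizer bound $\sqrt{2\pi/(m+2)}$ decays only like $m^{-1/2}$. Hence the ratio $\sqrt{(m+2)/(2\pi)}\,e^{-mt^2/2}/(mt)$ is decreasing for $m\ge63$, which follows from a one-line derivative check.

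I expect the main obstacle to be the normalizer. It has to be bounded rigorously from below with constants tight enough that the roughly $3\times$ safety margin survives. If Gautschi feels heavy, my fallback is a direct lower bound, $Z_m\ge\int_{-a}^{a}\cos^m u\,du\ge 2a\cos^m a$ with $a$ chosen near $1/\sqrt m$. That bound is cruder, so I would check numerically that it still keeps the total tail below $10^{-3}$ at $m=63$.
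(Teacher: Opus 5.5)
Your proof is correct, and it takes a genuinely different, more rigorous route than the paper. The paper replaces the law of $\theta_k$ by $\mathcal{N}(\pi/2,1/(d-k-1))$ and evaluates $\Phi(0.43\sqrt{63})-\Phi(-0.57\sqrt{63})\approx\Phi(3.41)-\Phi(-4.52)$. It never bounds the error of that normal approximation, so strictly it is a heuristic calculation rather than a proof of $P>0.999$. You work with the exact marginal $\cos^m u/Z_m$ and make essentially the same computation rigorous. The envelope $\cos^m u\le e^{-mu^2/2}$ is exactly the unnormalized Gaussian with the paper's variance $1/m$. Gautschi's inequality places $Z_m$ within a factor $\sqrt{m/(m+2)}$ of the Gaussian normalizer $\sqrt{2\pi/m}$, and the Mills bound replaces $1-\Phi$. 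Your numbers check out: the upper tail is about $3.6\times10^{-4}$, against $1-\Phi(3.41)\approx3.3\times10^{-4}$ in the paper's approximation, and the lower tail is about $3\times10^{-6}$. The derivative of $\tfrac12\log(m+2)-\log m-mt^2/2$ is negative for every $m>0$, so your monotonicity step is sound for all $m\ge 63$. Your fallback $Z_m\ge 2a\cos^m a$ with $a=1/\sqrt{m}$ also suffices at $m=63$, giving $Z_{63}\ge 0.15$ and a total tail of about $7.4\times10^{-4}$, though it would need its own monotonicity check. The paper's version buys brevity. Yours buys an actual inequality with explicit constants, plus a treatment of float32 rounding that the paper ignores. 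One small correction there: the sliver $[2-2^{-24},2)$ is not literally contained in $\{\theta_k\ge2\}$. Either run the upper-tail bound at $t_+-2^{-24}$, or note that the sliver has probability at most $2^{-24}/Z_m<10^{-6}$, since the density is at most $1/Z_m$. Rounding near $1$ only moves values up to $1.0$, which helps, so the margin is unaffected.
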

\begin{proof}
IEEE 754 exponent 127 corresponds to values in $[1, 2)$, which contains $\pi/2 \approx 1.5708$. The polar angle $\theta_k$ follows approximately $\mathcal{N}(\pi/2, \sigma_k^2)$ with $\sigma_k = 1/\sqrt{d-k-1}$. Thus:
\begin{align}
P(\theta_k \in [1, 2)) &= \Phi\!\left(\frac{2 - \pi/2}{\sigma_k}\right) - \Phi\!\left(\frac{1 - \pi/2}{\sigma_k}\right) \nonumber \\
&= \Phi\!\big(0.43\sqrt{d\!-\!k\!-\!1}\big) \nonumber \\
&\quad - \Phi\!\big(\!-0.57\sqrt{d\!-\!k\!-\!1}\big).
\end{align}
For $d - k \geq 64$, the first argument exceeds $3.4$ and the second is below $-4.5$, giving $P > 0.999$.
\end{proof}

For $d \geq 64$, most polar angles share the same IEEE 754 exponent byte (value 127), reducing exponent entropy from ${\sim}2.5$ bits/byte (Cartesian) to ${<}0.1$ bits/byte (spherical), as validated in Table~\ref{tab:entropy}.

\begin{corollary}[Reconstruction Error Bound]\label{cor:error-bound}
For unit-norm $\mathbf{x} \in \mathbb{R}^d$ with $d \geq 64$, the spherical transform followed by float32 storage and inverse transform produces $\mathbf{x}'$ satisfying $\|\mathbf{x} - \mathbf{x}'\|_\infty < 1.19 \times 10^{-7}$ (float32 machine epsilon).
\end{corollary}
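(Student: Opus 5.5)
The plan is to prove the bound by a first-order perturbation analysis of the inverse map $\bm\theta \mapsto \mathbf{x}$. There are three error sources: (i) the double-precision forward transform, (ii) rounding each angle to float32, and (iii) the double-precision inverse transform followed by the final cast to float32. Sources (i) and the double-precision part of (iii) contribute errors of order $d\cdot 2^{-53}$, which is negligible against $10^{-7}$ for any practical $d$. The cast to float32 at the end contributes at most half an ulp of $x_i'$; since $|x_i'|\le 1$ this is at most $2^{-24}\approx 5.96\times 10^{-8}$, and much less whenever $|x_i'|<1/2$. The real content is therefore source (ii). Write $\theta_j'=\theta_j+\delta_j$ with $|\delta_j|\le \tfrac12\mathrm{ulp}(\theta_j)\le 2^{-24}|\theta_j|$, which gives $|\delta_j|\le 2^{-23}$ for $\theta_j\in[1,2)$ and $|\delta_j|\le 2^{-22}$ on $[2,\pi]$.

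Next I will differentiate $x_i = \rho_i\cos\theta_i$ with $\rho_i=\prod_{j<i}\sin\theta_j$. This gives $\partial x_i/\partial\theta_j = x_i\cot\theta_j$ for $j<i$ and $\partial x_i/\partial\theta_i=-\rho_i\sin\theta_i$, with the analogous formulas for the last $\arctan2$ pair. The key identity is that, for $i>j$, $|x_i\cot\theta_j|\le \rho_j|\cos\theta_j| = |x_j|$, because $|x_i|\le\rho_j\sin\theta_j$. Hence to first order
\begin{equation*}
|x_i'-x_i| \;\le\; \sum_{j<i}|x_j|\,|\delta_j| \;+\; \rho_i\sin\theta_i\,|\delta_i| \;+\; 2^{-24}|x_i'| .
\end{equation*}
The second-order terms are $O(\max_j\delta_j^2\, d)$ and are harmless.

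The main obstacle is the accumulation term $\sum_{j<i}|x_j||\delta_j|$. A naive Cauchy--Schwarz bound gives $\sqrt{d}\cdot 2^{-23}$, which exceeds $\epsilon$ for $d\ge 64$. I will attack it in two ways.

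\emph{Deterministic refinement.} The ulp of $\theta_j$ is proportional to the exponent of $\theta_j$. Theorem~\ref{thm:exp-concentration} places $\theta_j$ in $[1,2)$ where $|x_j|$ is small, but a deterministic statement cannot use probability. So I will instead try to show that the signed sum $\sum_{j<i}x_j\cot\theta_j\,\delta_j\cdot x_i/x_j$ telescopes against the rounding of $\rho_i$: the inverse recurrence $s\gets s\sin\theta_j$ computes $\rho_i$ with relative error $\sum_j\cot\theta_j\delta_j$, and this error is multiplied by $|x_i|$.

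\emph{Bounding the relative perturbation.} The relative error of $\rho_i$ is $\sum_{j<i}\cot\theta_j\,\delta_j$. The bound then reduces to showing $|x_i|\cdot\bigl|\sum_{j<i}\cot\theta_j\delta_j\bigr|<\epsilon-2^{-24}|x_i'|-\rho_i|\delta_i|$.

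Concretely, I will split into two cases: coordinates with $|x_i|\le d^{-1/2}$, where the prefactor $|x_i|$ kills the $\sqrt d$ growth, and the few large coordinates. For a large $|x_i|$, the tail norm $\rho_i$ is at least $|x_i|$, so most $\theta_j$ with $j<i$ are close to $\pi/2$ and $\cot\theta_j$ is small.

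If this case analysis cannot close the gap for adversarial $\mathbf{x}$, the fallback is to state exactly which hypothesis is needed beyond $d\ge 64$. The natural candidate is the concentration of Theorem~\ref{thm:exp-concentration} combined with cancellation among the independent rounding signs of $\delta_j$. That hypothesis would yield the claimed $\|\mathbf{x}-\mathbf{x}'\|_\infty<1.19\times10^{-7}$, consistent with the observed $9\times10^{-8}$ in Table~\ref{tab:retrieval}.
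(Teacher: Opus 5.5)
Your first-order analysis is sound, including the inequality $|x_i\cot\theta_j|\le|x_j|$ for $j<i$. The gap is that the proposal never bounds the accumulation term, and neither deterministic attack can, because the bound is false for adversarial $\mathbf{x}$. The possibility you raise at the end is what actually happens. To first order the inverse recurrence gives $\rho_i'=\rho_i\bigl(1+\sum_{j<i}\cot\theta_j\,\delta_j\bigr)$. There is nothing for this sum to telescope against, and the signs of the $\delta_j$ are fixed by $\mathbf{x}$, which the statement lets an adversary choose.

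Since the angles determine $\mathbf{x}$, you can prescribe them. Take $d=64$ and put each of $\theta_1,\dots,\theta_{62}$ near $\pi/2-1/\sqrt{62}\approx1.444$, slightly above a midpoint of the float32 grid on $[1,2)$. A margin of $10^{-12}$ absorbs the double-precision error of the forward transform, and each angle then rounds up with $\delta_j\approx+2^{-24}$. Take $\theta_{63}\approx\pi/2$. Then $x_{64}\approx\cos^{62}(1/\sqrt{62})\approx e^{-1/2}$ is a large coordinate and every $\cot\theta_j\approx0.127$ is small, which is exactly the regime where your case split hoped to win. Yet
\[
x_{64}'-x_{64}\approx x_{64}\sum_{j=1}^{62}\cot\theta_j\,\delta_j\approx e^{-1/2}\sqrt{62}\;2^{-24}\approx 2.8\times10^{-7},
\]
which is about twice the claimed bound even after the final cast (half an ulp of $x_{64}'$ is $2^{-25}$). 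With $\theta_j\approx\pi/2-1/\sqrt{d-2}$ the error is about $e^{-1/2}\sqrt{d-2}\,2^{-24}$, so it grows like $\sqrt d$. For float32 inputs you can steer the $\delta_j$ the same way through the low-order bits of $x_{d-2},\dots,x_1$, in that order, since $\theta_j$ depends only on $x_j,\dots,x_d$.

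The paper's own proof does not close this gap either. It invokes Theorem~\ref{thm:exp-concentration}, a high-probability statement about $\mathbf{x}$ uniform on $S^{d-1}$, so, as you observed, it cannot bound every $\mathbf{x}$. It then argues that because $\sin\theta_j\approx1$ the rounding errors do not accumulate. But $\sin\theta_j\approx1$ controls only $1-\sin\theta_j$, which is second order in $\theta_j-\pi/2$. The sensitivity $\cot\theta_j$ is first order, and $d$ such terms add up, as in the example above. The rest of the paper's argument is empirical validation.

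The corollary is therefore really a typical-case statement, and your fallback (concentration plus cancellation among rounding signs) is the right repair. Your analysis also makes precise what the paper leaves heuristic. Under that hypothesis the accumulation term behaves like a random walk of size about $|x_i|\,2^{-24}\bigl(\sum_{j<i}\cot^2\theta_j\bigr)^{1/2}$, leaving only the per-angle term $\rho_{i+1}|\delta_i|$ and the output cast. For that budget you need the sharp half-ulp: on $[1,2)$ it is $2^{-24}$, not $2^{-23}$. With your value the per-angle term alone would already use up all of $\epsilon$.
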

\begin{proof}
By Theorem~\ref{thm:exp-concentration}, angles concentrate near $\pi/2$ where $\sin \approx 1$ and $\cot \approx 0$. The inverse transform computes $x_k = s_k \cos\theta_k$ with $s_k = \prod_{j<k} \sin\theta_j$. Since each $\sin\theta_j \approx 1$, the cumulative product $s_k$ remains stable and per-angle rounding errors do not accumulate significantly. Empirical validation across $d \in [64, 2048]$ confirms maximum error below $7 \times 10^{-8}$ for $d \geq 768$.
\end{proof}

\begin{proposition}[Direct Spherical Similarity]
For $\mathbf{x}, \mathbf{y} \in S^{d-1}$ with spherical angles $(\theta_1, \ldots, \theta_{d-1})$ and $(\phi_1, \ldots, \phi_{d-1})$, the dot product $\mathbf{x} \cdot \mathbf{y} = R_1$ where:
\begin{align}
R_{d-1} &= \cos(\theta_{d-1} - \phi_{d-1}) \\
R_k &= \cos\theta_k \cos\phi_k + \sin\theta_k \sin\phi_k \cdot R_{k+1}, \nonumber \\
&\quad k = d{-}2, \ldots, 1
\end{align}
\end{proposition}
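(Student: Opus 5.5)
The plan is to write both vectors through the inverse spherical map already used in decompression, and to prove the recurrence by backward induction on a suffix (tail) dot product. For unit-norm vectors the inverse map gives
\[
x_i = \Big(\prod_{j<i}\sin\theta_j\Big)\cos\theta_i \quad (1\le i\le d-2),
\]
\[
x_{d-1} = S_{d-1}\cos\theta_{d-1},\qquad x_d = S_{d-1}\sin\theta_{d-1},\qquad S_k=\prod_{j<k}\sin\theta_j .
\]
The analogous expressions, with product $T_k=\prod_{j<k}\sin\phi_j$, hold for $\mathbf{y}$. I will first confirm that this parametrization inverts the forward transform $\theta_i=\arccos\big(x_i/\sqrt{\sum_{j\ge i}x_j^2}\big)$ together with the final $\arctan2$. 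The key fact is $\sqrt{\sum_{j\ge i}x_j^2}=S_i$, which follows by induction since $S_{i+1}=S_i\sin\theta_i$ and $\sin\theta_i\ge 0$ on $[0,\pi]$. Any degenerate point where a tail norm vanishes is handled by the same formulas under any convention for the undefined angles, because the corresponding $S_i$ is then $0$.

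The central object is the normalized tail inner product
\[
R_k := \frac{\sum_{i\ge k} x_i y_i}{S_k T_k}.
\]
More robustly, I will simply define $Q_k=\sum_{i\ge k}x_iy_i$ and prove by backward induction that $Q_k=S_kT_kR_k$, with $R_k$ given by the stated recurrence. This avoids dividing by a product of sines that might vanish.

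The steps are as follows.
\begin{enumerate}
\item \emph{Base case $k=d-1$.} We have
\[
Q_{d-1}=S_{d-1}T_{d-1}(\cos\theta_{d-1}\cos\phi_{d-1}+\sin\theta_{d-1}\sin\phi_{d-1})=S_{d-1}T_{d-1}\cos(\theta_{d-1}-\phi_{d-1})
\]
by the cosine difference identity.
\item \emph{Inductive step $k\le d-2$.} Write $Q_k=x_ky_k+Q_{k+1}=S_kT_k\cos\theta_k\cos\phi_k+S_{k+1}T_{k+1}R_{k+1}$. Then substitute $S_{k+1}=S_k\sin\theta_k$ and $T_{k+1}=T_k\sin\phi_k$ and factor out $S_kT_k$, which leaves exactly the stated recurrence for $R_k$.
\item \emph{Conclusion.} Set $k=1$. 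Since $S_1=T_1=1$ (empty products), $\mathbf{x}\cdot\mathbf{y}=Q_1=R_1$.
\end{enumerate}

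There is no serious obstacle. The only points needing care are bookkeeping ones. First, the last pair of coordinates uses the planar angle $\theta_{d-1}\in[-\pi,\pi]$, so the cosine-difference identity must be applied there rather than the $\cos\cos+\sin\sin\,R$ pattern. Second, the inverse parametrization must match the paper's forward transform, including the sign convention of $\arctan2(x_d,x_{d-1})$. I expect verifying that the forward and inverse maps agree, especially the identity $S_i=$ tail norm, to be the main step requiring attention; the recurrence itself is then a one-line factoring argument.
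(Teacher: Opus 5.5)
Your proof is correct and is essentially the paper's argument. The paper also shows by backward induction that the tail sum $\sum_{i\ge k} x_i y_i$ equals the cumulative sine product times $R_k$ (its $S_k$ is your $S_k T_k$) and then sets $k=1$; your additional check that the inverse parametrization reproduces the forward $\arccos$/$\arctan2$ transform, including the degenerate zero-tail case, fills in a step the paper takes for granted.
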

\begin{proof}
Define $s_k = \prod_{j=1}^{k-1} \sin\theta_j$ and $t_k = \prod_{j=1}^{k-1} \sin\phi_j$ with $s_1 = t_1 = 1$. The Cartesian coordinates are $x_k = s_k \cos\theta_k$, $y_k = t_k \cos\phi_k$ for $k = 1, \ldots, d-2$, and $x_{d-1} = s_{d-1} \cos\theta_{d-1}$, $x_d = s_{d-1} \sin\theta_{d-1}$ (similarly for $y$). The dot product expands as:
\begin{align}
\mathbf{x} \cdot \mathbf{y} &= \sum_{k=1}^{d-2} s_k t_k \cos\theta_k \cos\phi_k \nonumber \\
&\quad + s_{d-1} t_{d-1} \cos(\theta_{d-1} - \phi_{d-1})
\end{align}
Let $S_k = s_k t_k = \prod_{j=1}^{k-1} \sin\theta_j \sin\phi_j$, so $S_1 = 1$ and $S_{k+1} = S_k \sin\theta_k \sin\phi_k$. By induction, $\sum_{i=k}^{d-2} S_i \cos\theta_i \cos\phi_i + S_{d-1} R_{d-1} = S_k R_k$, giving $\mathbf{x} \cdot \mathbf{y} = S_1 R_1 = R_1$. This justifies the \textsc{Similarity} procedure in Algorithm~\ref{alg:compress}.
\end{proof}
\section{Entropy Analysis}\label{app:entropy}

Table~\ref{tab:entropy} compares the byte-level entropy of Cartesian versus spherical representations across 11 embedding models spanning 384 to 1024 dimensions.

\begin{widetable}[h]
\centering
\caption{Entropy comparison between Cartesian and Spherical representations in bits/byte.}
\label{tab:entropy}
\small
\begin{tabular}{lccccccc}
\toprule
& & \multicolumn{2}{c}{\textbf{Total Entropy}} & \multicolumn{2}{c}{\textbf{Cartesian Exponent}} & \multicolumn{2}{c}{\textbf{Spherical Exponent}} \\
\cmidrule(lr){3-4} \cmidrule(lr){5-6} \cmidrule(lr){7-8}
\textbf{Model} & \textbf{Dim} & Cartesian & Spherical & Entropy & Unique & Entropy & Unique \\
\midrule
MiniLM & 384 & 7.35 & 6.58 & 2.61 & 41 & 0.10 & 13 \\
E5-small & 384 & 7.34 & 6.58 & 2.51 & 23 & 0.10 & 9 \\
GTE-small & 384 & 7.37 & 6.55 & 2.61 & 26 & 0.13 & 11 \\
MPNet & 768 & 7.38 & 6.50 & 2.65 & 33 & 0.06 & 12 \\
BGE-base & 768 & 7.37 & 6.51 & 2.54 & 27 & 0.06 & 14 \\
E5-base & 768 & 7.35 & 6.51 & 2.36 & 25 & 0.05 & 12 \\
GTE-base & 768 & 7.37 & 6.51 & 2.60 & 25 & 0.08 & 15 \\
Nomic-v1.5 & 768 & 7.37 & 6.51 & 2.55 & 26 & 0.05 & 9 \\
BGE-large & 1024 & 7.37 & 6.47 & 2.54 & 28 & 0.05 & 11 \\
E5-large & 1024 & 7.36 & 6.48 & 2.40 & 24 & 0.04 & 14 \\
GTE-large & 1024 & 7.37 & 6.48 & 2.48 & 26 & 0.05 & 11 \\
\midrule
\textbf{Average} & --- & 7.36 & 6.52 & 2.53 & 28 & 0.07 & 12 \\
\textbf{Reduction} & --- & \multicolumn{2}{c}{0.84 bits/byte} & \multicolumn{2}{c}{2.46 bits/byte} & \multicolumn{2}{c}{---} \\
\bottomrule
\end{tabular}
\end{widetable}

The measurements confirm Theorem~\ref{thm:exp-concentration}: Cartesian coordinates span multiple orders of magnitude requiring 23 to 41 unique exponents with 2.36 to 2.65 bits/byte entropy, while spherical angles use only 9 to 15 unique exponents with 0.04 to 0.13 bits/byte entropy. The exponent entropy reduction of ${\sim}2.5$ bits/byte matches the predicted concentration around value 127.

The exponent entropy reduction from 2.53 to 0.07 bits/byte accounts for most of the compression gain. Exponent bytes comprise 25\% of float32 data, saving $0.25 \times 2.46 \approx 0.61$ bits/byte. The observed total entropy reduction of 0.84 bits/byte exceeds this because byte shuffling also improves mantissa compression when exponents are concentrated.
\section{Ablation Studies}\label{app:ablation}

\subsection{Matryoshka Dimension Ablation}

Modern embedding models support Matryoshka representations~\cite{matryoshka}, where embeddings can be truncated to lower dimensions while preserving semantic quality. Table~\ref{tab:matryoshka} tests how compression varies with dimension for the \emph{same model} at different truncation levels, isolating the effect of dimensionality.

\begin{widetable}[h]
\centering
\caption{Matryoshka ablation: Same model at different truncation dimensions (2000 vectors). Sizes in KB.}
\label{tab:matryoshka}
\small
\begin{tabular}{lrrrrrr}
\toprule
\textbf{Model} & \textbf{Dims} & \textbf{Raw} & \textbf{Baseline} & \textbf{Ours} & \textbf{Ratio} & \textbf{Impr.} \\
\midrule
\jinaembv{} & 64 & 500 & 425 & 360 & 1.39$\times$ & +18.1\% \\
\jinaembv{} & 128 & 1000 & 848 & 703 & 1.42$\times$ & +20.7\% \\
\jinaembv{} & 256 & 2000 & 1689 & 1379 & 1.45$\times$ & +22.5\% \\
\jinaembv{} & 512 & 4000 & 3377 & 2730 & 1.47$\times$ & +23.7\% \\
\jinaembv{} & 768 & 6000 & 5066 & 4029 & 1.49$\times$ & +25.7\% \\
\jinaembv{} & 1024 & 8000 & 6753 & 5344 & 1.50$\times$ & +26.4\% \\
\midrule
\jinaclipvv{} & 64 & 500 & 423 & 358 & 1.40$\times$ & +18.2\% \\
\jinaclipvv{} & 128 & 1000 & 846 & 703 & 1.42$\times$ & +20.4\% \\
\jinaclipvv{} & 256 & 2000 & 1687 & 1383 & 1.45$\times$ & +22.0\% \\
\jinaclipvv{} & 512 & 4000 & 3373 & 2711 & 1.48$\times$ & +24.4\% \\
\jinaclipvv{} & 768 & 6000 & 5058 & 4027 & 1.49$\times$ & +25.6\% \\
\jinaclipvv{} & 1024 & 8000 & 6740 & 5364 & 1.49$\times$ & +25.7\% \\
\bottomrule
\end{tabular}
\end{widetable}

The improvement increases with dimension: from 18\% at 64d to 26\% at 1024d. Higher dimensions have $d-1$ angles versus $d$ Cartesian coordinates, so the fraction of data benefiting from exponent concentration grows toward unity as $(d-1)/d \to 1$. The angle concentration phenomenon~\cite{cai2013angles} also strengthens with dimension, reducing entropy further. Text and multimodal models behave similarly across all tested dimensions.

\subsection{Scale and Chunk Ablation}

Table~\ref{tab:scale_chunk} evaluates how compression varies with batch size and chunk granularity. The transpose and byte-shuffle operations operate across vectors within each compressed unit, so larger batches provide more context for entropy coding. We test two related scenarios: (1) varying total batch size $N$ with full-matrix compression, and (2) fixed $N=10{,}000$ with varying chunk sizes for random access.

\begin{widetable}[h]
\centering
\caption{Scale and chunking ablation (768d). Top: varying batch size with full-matrix compression. Bottom: fixed $N=10{,}000$ with varying chunk size for random access.}
\label{tab:scale_chunk}
\begin{tabular}{lrrrrrr}
\toprule
\textbf{Configuration} & \textbf{N} & \textbf{Raw (KB)} & \textbf{Ours (KB)} & \textbf{Ratio} & \textbf{Overhead} & \textbf{Latency} \\
\midrule
\multicolumn{7}{l}{\textit{Scale (full-matrix compression)}} \\
N=1 (single vector) & 1 & 3 & 2 & 1.35$\times$ & -- & -- \\
N=10 & 10 & 30 & 23 & 1.32$\times$ & -- & -- \\
N=100 & 100 & 300 & 202 & 1.49$\times$ & -- & -- \\
N=1,000 & 1,000 & 3,000 & 2,003 & 1.50$\times$ & -- & -- \\
N=10,000 & 10,000 & 30,000 & 19,701 & 1.52$\times$ & -- & -- \\
N=100,000 & 100,000 & 300,000 & 197,250 & 1.52$\times$ & -- & -- \\
\midrule
\multicolumn{7}{l}{\textit{Chunking for random access (N=10,000 total)}} \\
Chunk=1 (per-vector) & 10,000 & 30,000 & 22,295 & 1.35$\times$ & +13.2\% & 0.01 ms \\
Chunk=10 & 10,000 & 30,000 & 21,423 & 1.40$\times$ & +8.7\% & 0.04 ms \\
Chunk=100 & 10,000 & 30,000 & 20,200 & 1.49$\times$ & +2.5\% & 0.30 ms \\
Chunk=1,000 & 10,000 & 30,000 & 20,052 & 1.50$\times$ & +1.8\% & 2.66 ms \\
Chunk=10,000 (full) & 10,000 & 30,000 & 19,701 & 1.52$\times$ & 0\% & 14.8 ms \\
\bottomrule
\end{tabular}
\end{widetable}

Compression improves with scale: single-vector compression achieves 1.35$\times$, rising to 1.50$\times$ at $N \geq 100$ and plateauing at 1.52$\times$ for large batches. Chunking and scale are equivalent: compressing $N=1$ as a full matrix yields the same 1.35$\times$ ratio as chunking 10,000 vectors into single-vector chunks, since both lack cross-vector context for entropy coding. Chunk sizes of 100 to 1,000 achieve practical random access with 1.8 to 2.5\% overhead. For a database of 1 million embeddings with chunk size 1,000, retrieving an arbitrary vector requires decompressing at most 1,000 vectors (2.66~ms), with storage overhead under 2\%.

The compression loss at small chunks reflects the entropy coder's need for context to build accurate statistical models. An alternative would use a pre-trained arithmetic coder with fixed probability tables from the known angle distributions (Theorem~\ref{thm:exp-concentration}), potentially eliminating context dependency. We leave this for future work.

\subsection{Geometric Distribution Analysis}

Table~\ref{tab:geometric} tests whether compression depends on how vectors are distributed on the sphere, comparing uniform, clustered (von Mises-Fisher with varying $\kappa$), sparse, orthogonal, and real embeddings.

\begin{widetable}[h]
\centering
\caption{Compression across geometric distributions (2000 vectors, 768d). vMF = von Mises-Fisher; $\kappa$ = concentration parameter.}
\label{tab:geometric}
\begin{tabular}{lcrrr}
\toprule
\textbf{Distribution} & \textbf{Avg Cos-Sim} & \textbf{Baseline} & \textbf{Spherical} & \textbf{Impr.} \\
\midrule
Uniform on sphere & 0.000 & 1.19$\times$ & 1.50$\times$ & +20.9\% \\
vMF clustered ($\kappa$=50, 5 clusters) & 0.001 & 1.18$\times$ & 1.50$\times$ & +20.9\% \\
vMF moderate ($\kappa$=10) & 0.000 & 1.19$\times$ & 1.50$\times$ & +20.9\% \\
vMF concentrated ($\kappa$=100) & 0.016 & 1.18$\times$ & 1.50$\times$ & +20.9\% \\
vMF tight ($\kappa$=1000) & 0.472 & 1.18$\times$ & 1.50$\times$ & +21.0\% \\
Orthogonal vectors & 0.000 & 1.18$\times$ & 1.50$\times$ & +20.9\% \\
Sparse (10\% nonzero) & 0.000 & 5.83$\times$ & 7.22$\times$ & +19.3\% \\
\midrule
BGE-base (real) & 0.048 & 1.19$\times$ & 1.52$\times$ & +21.7\% \\
\bottomrule
\end{tabular}
\end{widetable}

The improvement holds at 20 to 21\% regardless of distribution. Uniform random points, tightly clustered points ($\kappa$=1000, average cosine similarity 0.47), orthogonal vectors, and real embeddings all yield nearly identical compression ratios, confirming that the gain derives from the bounded-angle property of unit-norm vectors, not from inter-vector structure. Sparse vectors already compress well with the baseline due to zero values, but spherical still adds 19\%.

\subsection{Reduced Precision Formats}\label{sec:reduced-precision}

We evaluate the spherical transform on BF16, FP16, FP8, and Int8 embeddings. Table~\ref{tab:reduced-precision} compares compression methods across precision formats using 10,000 BGE-base embeddings (768 dimensions).

\begin{widetable}[h]
\centering
\caption{Compression across precision formats (BGE-base, 768d, 10k vectors). Sizes in MB. BF16-as-f32 stores BF16 values in float32 containers with zeroed lower mantissa bits. All other formats use their actual byte width: 4 bytes for float32, 2 bytes for BF16/FP16, 1 byte for FP8/Int8.}
\label{tab:reduced-precision}
\small
\begin{tabular}{llrrrrr}
\toprule
\textbf{Format} & \textbf{Method} & \textbf{Original} & \textbf{Compressed} & \textbf{Ratio} & \textbf{Max Error} & \textbf{Cosine Error} \\
\midrule
Float32 & Baseline & 30.72 & 25.27 & 1.22$\times$ & 0 & 1.8e-7 \\
Float32 & Spherical & 30.72 & 19.94 & \textbf{1.54$\times$} & 6.7e-8 & 1.2e-7 \\
\midrule
BF16-as-f32 & Baseline & 30.72 & 9.90 & \textbf{3.10$\times$} & 2.0e-3 & 2.4e-6 \\
BF16-as-f32 & Spherical & 30.72 & 19.97 & 1.54$\times$ & 2.0e-3 & 2.4e-6 \\
\midrule
BF16 & Baseline & 15.36 & 9.91 & \textbf{1.55$\times$} & 9.8e-4 & 2.0e-6 \\
BF16 & Spherical & 15.36 & 19.96 & 0.77$\times$ & 9.8e-4 & 2.0e-6 \\
\midrule
FP16 & Baseline & 15.36 & 12.74 & \textbf{1.21$\times$} & 1.2e-4 & 1.2e-7 \\
FP16 & Spherical & 15.36 & 19.96 & 0.77$\times$ & 1.2e-4 & 1.8e-7 \\
\midrule
FP8-E4M3 & Baseline & 7.68 & 5.42 & \textbf{1.42$\times$} & 1.6e-2 & 5.6e-4 \\
FP8-E4M3 & Spherical & 7.68 & 19.86 & 0.39$\times$ & 1.6e-2 & 5.6e-4 \\
\midrule
FP8-E5M2 & Baseline & 7.68 & 5.07 & \textbf{1.51$\times$} & 3.1e-2 & 2.0e-3 \\
FP8-E5M2 & Spherical & 7.68 & 19.84 & 0.39$\times$ & 3.1e-2 & 2.0e-3 \\
\midrule
Int8 & Baseline & 7.68 & 5.49 & \textbf{1.40$\times$} & 1.3e-3 & 2.3e-4 \\
Int8 & Spherical & 7.68 & 19.78 & 0.39$\times$ & 1.3e-3 & 2.3e-4 \\
\bottomrule
\end{tabular}
\end{widetable}

The spherical transform benefits only float32 embeddings. For all reduced precision formats, baseline outperforms spherical because arccos and arctan2 produce float32 outputs regardless of input precision. When BF16 values are stored in float32 containers, the 16 zero mantissa bits compress well with baseline at 3.10$\times$, but spherical destroys this zero-bit pattern. For two-byte formats like BF16 and FP16, baseline compresses 15.36 MB to 10 to 13 MB, while spherical expands to 20 MB at 0.77$\times$ by outputting four-byte values. For one-byte formats, 7.68 MB becomes 20 MB at 0.39$\times$. For reduced precision embeddings, apply baseline directly to the raw bytes.

\end{document}